\documentclass[11pt]{article}

\usepackage[final]{acl}

\usepackage{times}
\usepackage{latexsym}
\usepackage{times}
\usepackage{latexsym}
\usepackage{titletoc}
\usepackage{tcolorbox}
\usepackage{tabularx}
\usepackage{booktabs}
\tcbuselibrary{skins,breakable}
\usepackage{listings}
\usepackage{listingsutf8}

\usepackage[T1]{fontenc}

\usepackage[utf8]{inputenc}
\usepackage{newunicodechar}
\usepackage{amssymb}
\usepackage{threeparttable}
\definecolor{Qwen17}{RGB}{245,242,255} 
\definecolor{Qwen4B}{RGB}{237,250,242} 
\usepackage{amsthm}
\newtheorem{definition}{Definition}
\newtheorem{theorem}{Theorem}
\newtcolorbox{dialogbox}{
  colback=gray!10!white, colframe=black, sharp corners,
  boxrule=0.5mm, top=10pt, bottom=10pt, left=10pt, right=10pt, breakable
}
\usepackage{textcomp} 
\usepackage{listings}
\usepackage{upquote}  

\newunicodechar{✓}{\checkmark}
\newunicodechar{✗}{\ding{55}}

\usepackage{microtype}
\usepackage{enumitem}
\setlist{nosep}

\usepackage{inconsolata}

\usepackage{graphicx}
\usepackage{amsmath} 
\usepackage{amsmath, amssymb} 
\usepackage{tikz}
\usetikzlibrary{positioning} 
\usepackage{hyperref}
\usepackage{multirow}
\usepackage[table,xcdraw]{xcolor}
\usepackage[dvipsnames]{xcolor}
\tcbuselibrary{skins,breakable,listings}
\usepackage{listings}
\usepackage{listingsutf8}
\usepackage{pifont} 
\usepackage{subcaption}
\usepackage{caption}
\usepackage{booktabs}
\usepackage{fontawesome5}
\usepackage{algorithm}     
\usepackage{algpseudocode} 
\usepackage{float}

\usepackage[T1]{fontenc}

\usepackage[utf8]{inputenc}

\usepackage{microtype}

\usepackage{inconsolata}
\usepackage{adjustbox}

\title{SQL-ASTRA: Alleviating Sparse Feedback in Agentic SQL via Column-Set Matching and Trajectory Aggregation}

\author{
  \textbf{Long Li}$^{1,\dagger}$ \quad \textbf{Zhijian Zhou}$^{2,6,\dagger}$ \quad \textbf{Jiangxuan Long}$^{3}$ \quad \textbf{Peiyang Liu}$^{4}$ \\
  \textbf{Weidi Xu}$^{5}$ \quad \textbf{Zhe Wang}$^{1}$ \quad \textbf{Shirui Pan}$^{1,*}$ \quad \textbf{Chao Qu}$^{2,7,}\thanks{Co-corresponding authors. $^{\dagger}$Equal contribution.}$ \\
  $^{1}$Griffith University, Brisbane, Australia \quad
  $^{2}$Fudan University, Shanghai, China \\
  $^{3}$The University of Hong Kong, Hong Kong, China \quad
  $^{4}$Peking University, Beijing, China \\
  $^{5}$InFly, China \quad
  $^{6}$Shanghai Innovation Institute, China \\
  $^{7}$Shanghai Academy of Artificial Intelligence for Science, China \\
  \texttt{long.li@griffithuni.edu.au} \\
}

\begin{document}
\maketitle
\begin{abstract}
Agentic Reinforcement Learning (RL) shows promise for complex tasks, but Text-to-SQL remains mostly restricted to single-turn paradigms. A primary bottleneck is the credit assignment problem. In traditional paradigms, rewards are determined solely by the final-turn feedback, which ignores the intermediate process and leads to ambiguous credit evaluation. To address this, we propose Agentic SQL, a framework featuring a universal two-tiered reward mechanism designed to provide effective trajectory-level evaluation and dense step-level signals. First, we introduce Aggregated Trajectory Reward (ATR) to resolve multi-turn credit assignment. Using an asymmetric transition matrix, ATR aggregates process-oriented scores to incentivize continuous improvement. Leveraging Lyapunov stability theory, we prove ATR acts as an energy dissipation operator, guaranteeing a cycle-free policy and monotonic convergence. Second, Column-Set Matching Reward (CSMR) provides immediate step-level rewards to mitigate sparsity. By executing queries at each turn, CSMR converts binary (0/1) feedback into dense $[0, 1]$ signals based on partial correctness. Evaluations on BIRD show a 5\% gain over binary-reward GRPO. Notably, our approach outperforms SOTA Arctic-Text2SQL-R1-7B on BIRD and Spider 2.0 using identical models, propelling Text-to-SQL toward a robust multi-turn agent paradigm.
\end{abstract}

\section{Introduction}

In recent years, Agentic Reinforcement Learning (RL) has garnered significant attention~\cite{zhang2025landscape,zhu2025medeyes}, empowering Large Language Models (LLMs) to engage in multi-turn interactions with environments to accomplish more complex tasks, such as deep research~\cite{openai2025o3,xu2025comprehensive,huang2025deep}, Web searching~\cite{jin2025_searchR1,guo2025tree}, and code execution~\cite{jiang2025verltoolholisticagenticreinforcement,feng2025retool}. In most scenarios, the capabilities of these methods are enhanced through Reinforcement Learning with Verifiable Rewards (RLVR)~\cite{zhang2025spft,zhou2026look,tan2025bottom}, where feedback is provided based on the correctness of the final outcome~\cite{deepseekai2025deepseekr1incentivizingreasoningcapability,li2025choice,cai2025reinforcement,cai2026vi}.

\begin{figure*}[htb!]
    \centering
    \includegraphics[width=\textwidth]{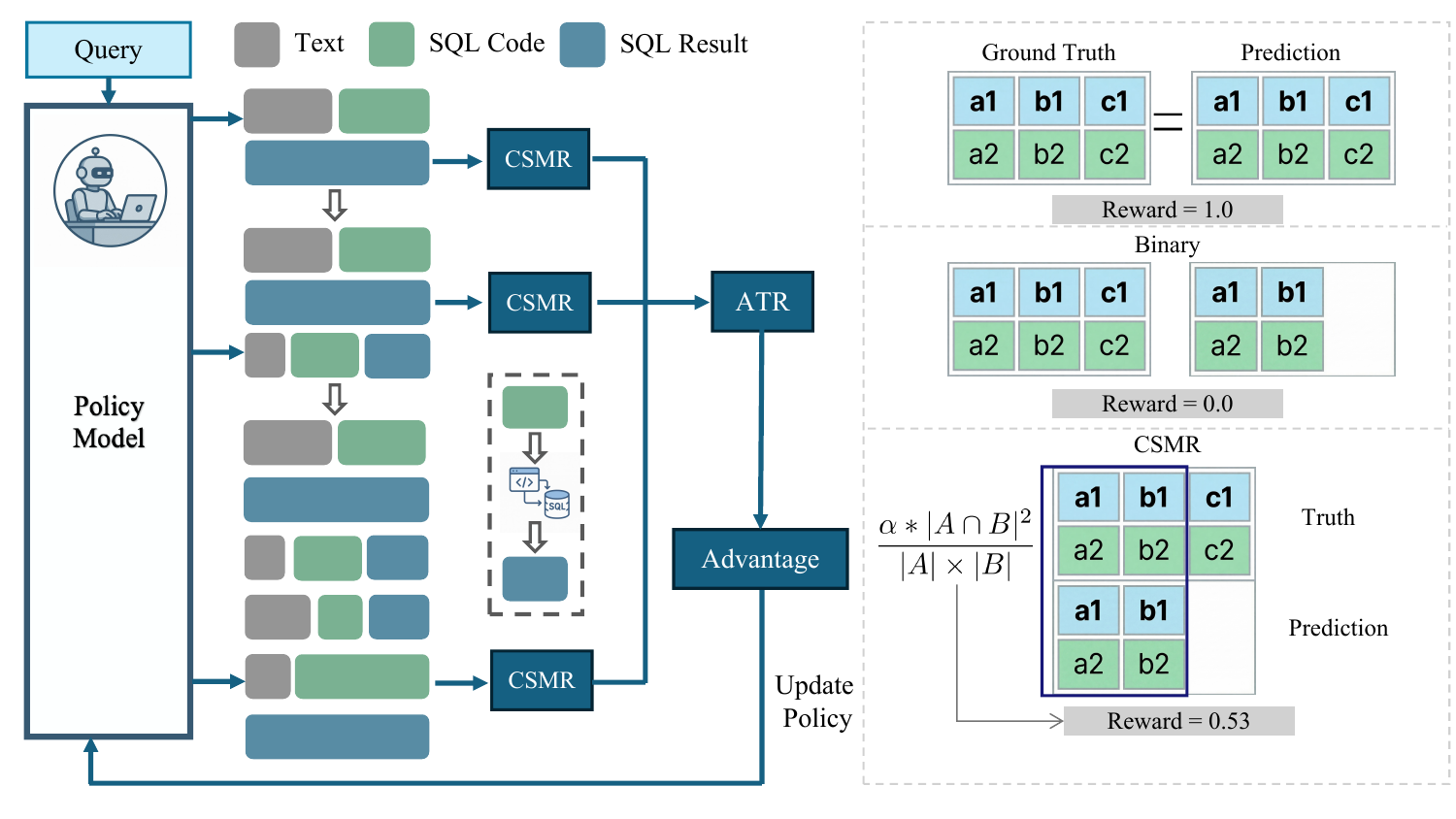}
    \caption{The framework of Agentic SQL. CSMR captures rich signals from error cases, whereas traditional binary rewards overlook this information.}
    \label{fig:main}
\end{figure*}

However, when applied to complex tasks requiring exploratory reasoning, Agentic RL faces three core challenges: \textbf{(1) Paradigm Constraint:} While agents are designed for multi-turn interaction, most existing work in specific domains like \textbf{Text-to-SQL} remains confined to a single-turn, static generation paradigm \cite{ma2025sql,yao2025arctic,ali2025state}. This framework fails to reflect the dynamic process of human data analysts who use multiple tentative queries to gather context and refine their strategy \cite{kim2024flex,furst2024evaluating}, which severely limits the model's ability to solve complex, real-world problems \cite{huo2025bird,leispider}. \textbf{(2) Credit Assignment:} In multi-turn trajectories, evaluation signals typically rely exclusively on \textbf{final-turn feedback} \cite{guo2025mtsql,feng2025retool,ding2026prpoaligningprocessreward,pang2026ticgrpoprovableefficientoptimization}. This ``all-or-nothing'' approach treats the interaction sequence as a black box, introducing a pervasive credit assignment problem where the agent cannot distinguish which intermediate steps contributed to the final outcome. \textbf{(3) Micro-level Reward Sparsity:} Even when step-level feedback is available, it is often restricted to coarse, \textbf{binary (0/1) signals} based on execution success \cite{ma2025sql,lei2025reasoning,yao2025arctic}. Such sparse feedback ignores the rich information in ``partially correct'' queries, providing insufficient granular guidance and drastically restricting the efficiency and robustness of RL training.

To systematically address these challenges, we propose an innovative methodology:

\begin{enumerate}[label=(\arabic*), wide=0pt]
    \item We construct a multi-turn interactive framework that enables agents to iteratively gather context and refine SQL queries through dynamic database interaction, effectively overcoming the single-turn limitations.
    
    \item To address the reliance on final-turn feedback, we design the \textbf{Aggregated Trajectory Reward (ATR)}. Unlike traditional methods, ATR utilizes an \textbf{Asymmetric Transition Matrix} to aggregate signals across the entire reasoning path, explicitly incentivizing strategies that exhibit continuous and monotonic improvement. Crucially, we provide a \textbf{rigorous Theoretical Guarantee}: by modeling the reasoning process as a dynamical system, we prove that ATR acts as an energy dissipation operator, mathematically ensuring a cycle-free policy and monotonic convergence based on Lyapunov stability theory.

    \item We propose the \textbf{Column-Set Matching Reward (CSMR)} as an immediate, \textbf{dense step-level reward}. By evaluating \textit{partial correctness} via column value-set normalization, CSMR transforms $0/1$ outcomes into granular signals within the $[0, 1]$ range. This provides precise, step-by-step guidance and serves as the foundational input for ATR trajectory aggregation.
\end{enumerate}

\section{Methodology and Convergence Analysis}

\subsection{Agentic SQL Framework}
\label{sec:problem_formulation}

Traditional Reinforcement Learning (RL) approaches for Large Language Models (LLMs) are mostly confined to a single-turn interaction setting, aiming to maximize the expected reward $J_{\text{single-turn}}(\theta)$ for a single response.
$$
J_{\text{single-turn}}(\theta) = \mathbb{E}_{s \sim \mathcal{D}, a \sim \pi_\theta(\cdot|s)} [R(s, a)].
$$
However, this paradigm is insufficient for complex, exploratory tasks like Text-to-SQL, which require the agent to engage in multi-step interaction, strategy iteration, and error recovery.

To accurately model this dynamic, we formalize the interactive Text-to-SQL task as a Finite-Horizon Markov Decision Process (MDP) defined by the tuple $\mathcal{M} = \langle \mathcal{S}, \mathcal{A}, \mathcal{P}, \mathcal{R} \rangle$. An agent learns a policy $\pi_\theta$ by generating interaction trajectories $\tau = (s_0, a_0, \dots, s_K)$ to maximize the expected cumulative reward based on our process-based reward function $\mathcal{R}$:
\begin{equation}
\label{eq:objective}
J_{\text{multi-turn}}(\theta) = \mathbb{E}_{\tau \sim \pi_\theta} \left[ \sum_{t=0}^{K-1} \mathcal{R}(s_t, a_t) \right]
\end{equation}

The overall framework is illustrated in Figure~\ref{fig:main}. In each iteration, after generating the SQL code, the agent interacts with the database to retrieve the execution result and subsequently employs CSMR to derive a dense reward. Upon completion of the multi-turn generation, the ATR mechanism aggregates the step-wise CSMRs to formulate the advantage for the entire trajectory.

\subsection{Reward Design}

\begin{algorithm}[H] 
\small
\begin{minipage}{\linewidth} 
\caption{CSMR (Column-Set Matching Reward) with Perfect Match Check}
\label{alg:csmr_with_check}
\begin{algorithmic}[1]
\Require{Gold result table $G$, Predicted result table $P$, Scaling factor $\alpha$ (e.g., 0.8)}
\Ensure{Reward score $R_{\text{CSMR}}$}

\State \Comment{\textbf{Step 0: Check for Perfect Match}}
\If{\Call{IsPerfectMatch}{G, P}} \Comment{e.g., \texttt{set(G) == set(P)}}
   \State \Return 1.0
\EndIf
\Statex

\State \Comment{\textbf{Step 1: Column Value-Set Extraction (for partial score)}}
\State $G' \gets \Call{DeduplicateRows}{G}$
\State $P' \gets \Call{DeduplicateRows}{P}$
\State $N_c^G \gets \Call{GetNumColumns}{G'}$
\State $N_c^P \gets \Call{GetNumColumns}{P'}$
\State $D_G \gets \emptyset$ \Comment{Initialize set of gold column-sets}
\State $D_P \gets \emptyset$ \Comment{Initialize set of predicted column-sets}

\For{$i \gets 1$ \textbf{to} $N_c^G$}
 \State $C_i^G \gets \Call{GetUniqueValuesInColumn}{G', i}$
 \State $D_G \gets D_G \cup \{C_i^G\}$
\EndFor
\For{$j \gets 1$ \textbf{to} $N_c^P$}
 \State $C_j^P \gets \Call{GetUniqueValuesInColumn}{P', j}$
 \State $D_P \gets D_P \cup \{C_j^P\}$
\EndFor
\Statex

\State \Comment{\textbf{Step 2: Column-Set Matching}}
\State $M \gets 0$ \Comment{Initialize match count}
\For{\textbf{each} $C^G \in D_G$}
  \If{\textbf{exists} $C^P \in D_P$ \textbf{s.t.} $C^G = C^P$}
  \State $M \gets M + 1$
  \EndIf
\EndFor
\Statex

\State \Comment{\textbf{Step 3: Compute Score}}
\State $N_{\text{prod}} \gets N_c^G \times N_c^P$
\If{$N_{\text{prod}} = 0$}
 \State $R_{\text{CSMR}} \gets 0.0$
\Else
 \State $R_{\text{CSMR}} \gets \frac{M^2}{N_{\text{prod}}} \times \alpha$
\EndIf
\Statex

\State \Return $R_{\text{CSMR}}$
\end{algorithmic}
\end{minipage} 
\end{algorithm}

\subsubsection{Column-Set Matching Reward}
As shown on the right side of Figure~\ref{fig:main}, in Text-to-SQL tasks, traditional reward mechanisms typically employ a sparse binary 0/1 signal. This mechanism requires the execution result of a predicted query, $P$, to \textit{perfectly match} the execution result of the gold answer, $G$. Any slight discrepancy (e.g., row ordering, partially missing data) results in a reward of 0. This sparse reward signal significantly hinders RL convergence, as it ignores the substantial ``partially correct'' information that predicted results may contain.

To address this, we propose Column-Set Matching Reward (CSMR), a novel, denser reward function designed to measure the meaningful \textit{structural overlap} between $P$ and $G$. Our core idea is to move away from comparing \textit{rows} (tuples) and instead compare the \textit{sets of values} within each \textit{column}. Specifically, our reward calculation process is detailed in Algorithm~\ref{alg:csmr_with_check}.

The CSMR mechanism, using product-based normalization, effectively captures \textit{partial correctness}. It provides a dense reward signal for matching column value-sets, even with incorrect row composition, thereby alleviating the sparsity problem of binary 0/1 rewards.

We note that CSMR is insensitive to row-level combinations. For example, $P$ = \texttt{[[a1, b2], [a2, b1]]} would perfectly match the column-sets of $G$ = \texttt{[[a1, b1], [a2, b2]]}. To address this, we introduce a scaling factor $\alpha < 1.0$ (e.g., 0.8). This factor caps the maximum reward from CSMR at $\alpha$, intentionally distinguishing these `pseudo-perfect' (column-only) matches from truly perfect row-level matches.

\subsection{Aggregated Trajectory Reward and Theoretical Guarantee}
While CSMR provides spatial density, it is insufficient for evaluating the temporal quality of a complete multi-step trajectory $\tau$. In a traditional RL setup, one might employ a \textit{Step-wise Update} strategy, which assigns and backpropagates reward signals $r_t$ immediately after each interaction turn. However, this naive approach suffers from high gradient variance due to \textbf{inconsistent signal scaling} across heterogeneous database schemas. Furthermore, it is susceptible to \textbf{reward hacking}—a phenomenon where agents learn to exploit the reward function through intentional oscillations. 

To overcome these limitations, we design the \textbf{Aggregated Trajectory Reward (ATR)} $R_{\text{ATR}}$. Unlike step-wise reinforcement, ATR is a single scalar reward provided only at episode termination to explicitly incentivize \textbf{continuous improvement} while suppressing oscillatory behaviors. We provide a detailed ablation study comparing ATR with the \textit{Step-wise Update} in Section~\ref{exp:atr}.

\paragraph{ATR Calculation}
The ATR aggregates the trajectory's utility by weighing both the direction and magnitude of semantic changes. The total reward is computed as:

The step-wise transition function $\Psi$ is derived from an asymmetric matrix $\mathcal{M}$ scaled by the gradient magnitude $|\Delta R_t|$, where $\Delta R_t = R_t - R_{t-1}$.
The core inductive bias is encoded in $\mathcal{M}$, where each element represents a specific semantic behavior:
\begin{equation}
\label{eq:atr_all}
\begin{split}
    \mathcal{M} &= \begin{pmatrix} 
        R_{\text{Low} \to \text{Low}} & R_{\text{Low} \to \text{High}} \\ 
        R_{\text{High} \to \text{Low}} & R_{\text{High} \to \text{High}} 
    \end{pmatrix} \\
    &= \begin{pmatrix} 
        -0.0 & +1.0 \\ 
        -1.5 & 0.0 
    \end{pmatrix} \\
    \Psi(R_{t-1}, R_t) &= \mathcal{M}_{s_{t-1}, s_t} \cdot 
    \begin{cases} 
        |\Delta R_t| & \text{if } s_{t-1} \neq s_t \\
        1.0 & \text{if } s_{t-1} = s_t
    \end{cases}
\end{split}
\end{equation}
\noindent The binary states $(s_{t-1}, s_t)$ are determined dynamically by the gradient and a stagnation threshold $\tau$:
\begin{enumerate}[label=(\roman*), leftmargin=*]
    \item \textbf{Transition ($s_{t-1} \neq s_t$):} A positive gradient $\Delta R_t > 0$ triggers an improvement transition, while $\Delta R_t < 0$ triggers degradation.
    \item \textbf{Stagnation ($s_{t-1} = s_t$):} When $\Delta R_t \approx 0$, the state is sustained as High if $R_t > \tau$, otherwise it remains Low.
\end{enumerate}
This design ensures that the agent is rewarded proportionally to the \textit{extent} of improvement ($|\Delta R_t|$), while the asymmetric $\mathcal{M}$ (with $|R_{\text{High} \to \text{Low}}| > |R_{\text{Low} \to \text{High}}|$) ensures strict dissipativity.

\paragraph{Theoretical Foundation: Lyapunov-Guided Convergence}

We demonstrate that the ATR mechanism is not merely heuristic, but acts as a necessary energy dissipation operator, guaranteeing asymptotic stability (i.e., convergence to the correct SQL query) from the perspective of Lyapunov stability theory~\cite{pmlr-v80-taylor18a}.

\textbf{1. Semantic Error Energy (Lyapunov Candidate Function):}
We transform the CSMR score $\Phi(s_t) = R_{\text{CSMR}}(s_t)$ into the \textbf{Semantic Error Energy} $V(s_t)$:
\begin{equation}
    V(s_t) = 1 - \Phi(s_t)
\end{equation}
This function satisfies the fundamental Lyapunov properties, where $V(s) = 0$ if and only if the system reaches the equilibrium point $s^*$ (the correct SQL).

\textbf{2. Stability Criterion and ATR's Dissipative Role:}
Asymptotic stability requires the energy sequence to be strictly decreasing, $\Delta V(s_t) = V(s_{t+1}) - V(s_t) < 0$.
Our $R_{\text{step}}$ is explicitly designed to approximate the negation of the orbital derivative:
\begin{equation}
    R_{\text{step}}(s_t, s_{t+1}) \propto -\nabla V(s_t) \approx \Phi(s_{t+1}) - \Phi(s_t)
\end{equation}
Maximizing the cumulative ATR implicitly optimizes the policy $\pi_\theta$ to maximize the rate of energy dissipation, accelerating convergence towards $s^*$.

\textbf{3. Necessity of Asymmetric Shaping: Eliminating Limit Cycles}

Multi-turn agents often suffer from \textit{limit cycles}, oscillating between suboptimal states. We prove an \textbf{Asymmetric Transition Matrix $\mathcal{M}$} is necessary to eliminate such cycles. While symmetric rewards satisfy $\sum_{t \in \mathcal{C}} R_t = 0$ for any cycle $\mathcal{C}$---yielding a merely \textit{Lagrange stable}~\cite{leine2010historical,zhang2018multiple} and oscillatory system---ATR imposes $|R_{\text{High} \to \text{Low}}| > |R_{\text{Low} \to \text{High}}|$, ensuring a net reward loss ($\sum_{t \in \mathcal{C}} R_t < 0$). This transforms the reasoning into a \textbf{strictly dissipative system}, forcing $\pi^*$ to be cycle-free along a monotonic energy-descent path (Appendix~\ref{app:proof}).

\textbf{4. Robustness via Quantized Feedback:}
The threshold $\tau$ in the ATR calculation acts as a \textbf{quantized feedback filter}, akin to techniques in Sliding Mode Control.
$$
S_t = \mathbb{I}(\Phi(s_t) > \tau)
$$
Given the stochastic nature of LLMs, the continuous potential $\Phi(s_t)$ may contain high-frequency noise. By discretizing the state transitions, we ensure that the Lyapunov derivative $\Delta V$ drives policy updates only when a \textbf{significant semantic phase transition} occurs, thereby enhancing the robustness of the learning process.

\subsection{GRPO Training with Tool Masking}

We use the GRPO algorithm to optimize our policy model~\cite{deepseekai2025deepseekr1incentivizingreasoningcapability}. For a specific question-answer pair $(q,a)$, GRPO's underlying behavior policy, $\pi_{\theta_\text{old}}$, generates a group of $G$ individual responses, denoted as $\{ o_i\}_{i=1}^G$. The advantage for the $i$-th response within this ensemble is then calculated by normalizing the rewards specific to that group, $\{ R_{\text{ATR}, i} \}_{i=1}^G$:
\begin{equation}
\hat{A}_{i,t} = \frac{R_{\text{ATR}, i} - \text{mean}(\{R_{\text{ATR}, j}\}_{j=1}^G)}{\text{std}(\{R_{\text{ATR}, j}\}_{j=1}^G)}.
\end{equation}
Thus, every token in the trajectory uses this normalized reward as its advantage. Additionally, we introduce a binary mask,
\begin{equation}
M_{i,t} =
\begin{cases}
1 & \text{if } o_{i,t} \text{ is reasoning token} \\
0 & \text{if } o_{i,t} \text{ is execution token}
\end{cases} 
\end{equation}
This loss masking ensures the model focuses on learning the reasoning process. We define the clipped surrogate objective as $\rho^{\text{clip}}_{i,t}(\theta) = \text{clip} \left( \rho_{i,t}(\theta), 1 - \varepsilon, 1 + \varepsilon \right)$. The optimized GRPO loss is:
\begin{equation}
\begin{aligned}
\mathcal{L}_{\text{group}}(\theta) = & \frac{1}{G}\sum_{i=1}^{G} \frac{1}{|o_i|}\sum_{t=1}^{|o_i|} \\
& \quad M_{i,t} \min \left( \rho_{i,t}(\theta) \hat{A}_{i,t}, \rho^{\text{clip}}_{i,t}(\theta) \hat{A}_{i,t} \right) \\[1ex] 
\mathcal{L}(\theta) = & -\mathbb{E}_{\substack{(q,a)\sim \mathcal{D} \\ {o_i}\sim 
\pi_{\theta_{\text{old}}}(\cdot\mid q)}} \Big[ \mathcal{L}_{\text{group}}(\theta) \Big]
\end{aligned}
\end{equation}
where $G$ is the number of sampled trajectories per group, and $\rho_{i,t}(\theta)$=$\frac{\pi_{\theta}(o_{i,t}\mid q,o_{i,<t})}{\pi_{\theta_{\mathrm{old}}}(o_{i,t}\mid q,o_{i,<t})}$ is the per-token importance ratio.

\section{Experiment}
\subsection{Setting}
Our experiments consist of two distinct sets of trials. The first set utilizes \textbf{Qwen2.5-7B-Instruct} as the base model, which undergoes direct RL training without a cold-start phase to demonstrate the viability of our approach. This configuration is compared with other methods employing Qwen2.5-Coder\footnote{Qwen2.5-Coder is overly specialized 
due to its training regime, leading to insufficient exploratory capabilities that render it unsuitable for this experiment.}. The second set involves the \textbf{OmniSQL} model, which requires a Format-6k fine-tuning step to acquire the tool-calling format; this is then compared with other methods based on the same model. All experiments are conducted on 32 NVIDIA A800-80G GPUs.

In terms of evaluation, we primarily utilize BIRD-Dev~\cite{li2024can} and Spider~\cite{yu2018spider} to assess general SQL capabilities. Furthermore, we evaluate the model's Agent capabilities using the challenging enterprise-grade Spider-2.0 dataset~\cite{lei2024spider}. Please refer to Appendix~\ref{append:parameters} for comprehensive training details.

\begin{table*}[h]
\centering
\begin{threeparttable}
\setlength{\tabcolsep}{2.5pt} 
\renewcommand{\arraystretch}{1.15}

\resizebox{\linewidth}{!}{%
\begin{tabular}{lccccc}
\toprule
\textbf{Model} & \textbf{RL Data} & \textbf{Base Model} & \textbf{Bird (Greedy)} & \textbf{Spider-Dev (Greedy)} &\textbf{Spider2.0-Lite (Mj@8)} \\
\midrule
GPT-4o & — & — & 64.4 & 80.4 & 15.6 \\
Deepseek-V3 & — & — & 62.5 & 78.3 & 15.6 \\
Qwen2.5-7B-Instruct~\cite{qwen2.5} & — & - & 47.5 & 74.7 & 5.1 \\
Qwen2.5-Coder-7B~\cite{qwen2.5} & — & - & 58.2 & 77.4 & 2.2 \\
SQL-R1 + Qwen2.5-Coder-7B~\cite{ma2025sql} & SynSQL-Complex-5K & Qwen2.5-Coder-7B & 63.1 & 84.5  & - \\
Reasoning-SQL-7B~\cite{ma2025sql} & BIRD & Qwen2.5-Coder-7B & 64.0   &78.7   & - \\
OmniSQL-7B~\cite{li2025omnisql} & — & Qwen2.5-Coder-7B & 64.1 & 85.0 & 12.6 \\
SQL-R1 + OmniSQL-7B~\cite{ma2025sql} & SynSQL-Complex-5K & OmniSQL-7B &  66.6 & 87.6  & 14.8 \\
Arctic-Text2SQL-R1-7B~\cite{yao2025arctic} & BIRD + SPIDER + Gretel* & OmniSQL-7B & 67.6   & -   & 15.6 \\
MTIR-SQL-4B~\cite{xu2025mtirsqlmultiturntoolintegratedreasoning} & BIRD + SPIDER & Qwen3-4B & 63.1   & 82.4   & - \\
\midrule
\multicolumn{6}{l}{\textit{Our Results}} \\
\multicolumn{6}{c}{\textit{Binary Reward}} \\
\rowcolor{Qwen17} Single-Turn GRPO & BIRD & Qwen2.5-7B-Instruct & 58.5   & 79.2   & 6.6 \\
\rowcolor{Qwen17} Agentic SQL  & BIRD & Qwen2.5-7B-Instruct & 61.1   &80.7   & 5.9 \\
\rowcolor{Qwen17} Agentic SQL + ATR  & BIRD & Qwen2.5-7B-Instruct & 63.6   &81.4   & 7.4 \\

\multicolumn{6}{c}{\textit{CSMR Reward}} \\
\rowcolor{Qwen17} Single-Turn GRPO + CSMR & BIRD & Qwen2.5-7B-Instruct & 59.4   & 79.3   & 6.6 \\
\rowcolor{Qwen17} Agentic SQL + CSMR & BIRD & Qwen2.5-7B-Instruct & 62.5   &81.1   & 8.1 \\
\rowcolor{Qwen17} Agentic SQL + CSMR + ATR & BIRD & Qwen2.5-7B-Instruct & \textbf{64.2}   & \textbf{82.9}   & 8.8  \\

\multicolumn{6}{c}{\textit{Ablation Study on ATR Design}} \\
\rowcolor{Qwen17} ATR & BIRD & Qwen2.5-7B-Instruct & 64.2   & 82.9   & 8.8  \\
\quad w/ Symmetric Matrix & BIRD & Qwen2.5-7B-Instruct & 60.1 & 80.2 & 7.4 \\
\quad w/ Step-wise Update & BIRD & Qwen2.5-7B-Instruct & 61.3 & 82.0 & 8.1 \\

\multicolumn{6}{c}{\textit{Cold-Start Model}} \\
\rowcolor{Qwen17} Single-Turn GRPO & BIRD + Spider & OmniSQL-7B & 67.4   & 87.4   & 14.8 \\
\rowcolor{Qwen17} Agentic SQL + CSMR + ATR & BIRD + Spider & OmniSQL-7B & \textbf{69.1}   & \textbf{88.8}   & 17.7 \\

\bottomrule
\end{tabular}
}
\vspace{-0in}
\caption{For Bird and Spider, we consistently use Greedy decoding, while for Spider 2.0, we use 8-time sampling followed by majority voting on the answers. *Gretel indicates Gretel-Synth-Filtered.}
\label{tab:main_results}
\end{threeparttable}
\end{table*}

\subsection{Results}
\paragraph{Baseline}
Our primary baseline is the performance of the same model in a \textit{single-turn SQL} setting. We also include strong proprietary models like GPT-4o~\cite{achiam2023gpt} and DeepSeek-V3~\cite{guo2024deepseek}, along with several other powerful models based on RL algorithms for SQL tasks, such as strong performers in the single-turn setting: \textbf{OmniSQL}, \textbf{SQL-R1}, \textbf{Reasoning-SQL} and \textbf{Arctic-R1}, as well as the multi-turn Agent framework \textbf{MTIR-SQL}.

\paragraph{Main Results}

In the Qwen2.5-7B-Instruct experiments, our framework demonstrates excellent performance without requiring a cold start. By framing Actions as parsed signals and embedding a Rethink mechanism into $\mathcal{P}$, the agent learns to generate and utilize SQL results within one RL loop. Consequently, $\pi_\theta$ acquires complex reasoning chains by maximizing $J_{\text{multi-turn}}(\theta)$, enabling RL training to start from scratch. Our experiments also reveal that this approach is highly dependent on the chosen base model retaining a certain level of exploration capability; for instance, it could not be directly initiated on Qwen2.5-Coder, which provides significant inspiration for the future training of agents.

Our main results are presented in Table~\ref{tab:main_results}. 
From the experiments, we can observe that the effectiveness of Agentic SQL is highly significant.
It outperforms the \textit{single-turn binary (0/1) GRPO} method by \textbf{5.7\%} on the BIRD dataset and \textbf{3.7\%} on the Spider dataset, respectively. In comparison to other methods, although the Qwen2.5-7B-Instruct base model we used has weaker foundational capabilities than Qwen2.5-Coder-7B, after being trained with Agentic SQL, it achieves superior performance. Compared to Reasoning-SQL-7B, which uses the same training dataset, our method completely surpasses it in both in-domain and OOD evaluations. To demonstrate the effectiveness of our method on a more powerful model, we utilized the \textbf{OmniSQL-7B} model for agent reinforcement learning generation. It surpassed the single-turn $\mathbf{0/1}$ reward GRPO on both the BIRD and Spider datasets. Furthermore, it also outperformed SQL-R1 and Arctic-Text2SQL-R1-7B, which are based on OmniSQL-7B, achieving improvements of 2.5\% and 1.5\% respectively on the BIRD dataset.

\paragraph{Result in Spider2.0}

Evaluation on Spider 2.0 typically requires the introduction of more complex workflows for the model to generate the final answer in multiple steps. We demonstrated an even higher level of excellence on the larger and more practical Spider 2.0 dataset. While a series of models using only 0/1 rewards generally converge around 15\%, our model achieves 17.7\%. This is because, by integrating the CSMR + ATR mechanism, the model can receive more dense signals and learn to move towards better directions in each call, progressively solving the problem.

\subsection{Ablation Study}

Our proposed approach primarily encompasses three core components: the Agentic SQL Framework, CSMR, and ATR. We conducted a thorough ablation study on these three modules sequentially. Table~\ref{tab:main_results} demonstrates that CSMR consistently outperforms the conventional binary reward across all experimental settings, thoroughly validating the effectiveness of its fine-grained reward mechanism. Furthermore, the Agentic SQL Framework proves to be the most significant contributor to performance enhancement, yielding a performance gain of nearly 3\% on the Bird dataset. Finally, the ATR Module effectively enhances the signal density within multi-turn conversations, ensuring that sufficient and effective feedback signals are obtained for each generation step.

\begin{figure*}[t]
\begin{minipage}[c]{0.3\linewidth}
    \centering
    \adjustbox{valign=c}{
        \small
        \begin{tabular}{| l | r |}
        \hline
        \textbf{Input Distribution} & \textbf{ATR Reward} \\
        \hline
        \texttt{[1]} & 1.00 \\ \texttt{[0, 1]} & 0.90 \\ \texttt{[0, 0, 1]} & 0.80 \\
        \texttt{[1, 1]} & 0.70 \\ \texttt{[0, 1, 1]} & 0.60 \\ \texttt{[1, 1, 1]} & 0.40 \\
        \texttt{[1, 0, 1]} & 0.30 \\ \texttt{[0]} & 0.00 \\ \texttt{[0, 0]} & -0.10 \\
        \texttt{[0, 0, 0]} & -0.20 \\ \texttt{[1, 0]} & -0.60 \\ \texttt{[0, 1, 0]} & -0.70 \\
        \texttt{[1, 0, 0]} & -0.70 \\ \texttt{[1, 1, 0]} & -0.90 \\
        \hline
        \end{tabular}
    } 
\end{minipage}
\hfill
\begin{minipage}[c]{0.7\linewidth}
    \centering
    \adjustbox{valign=c}{
        \includegraphics[width=0.9\linewidth]{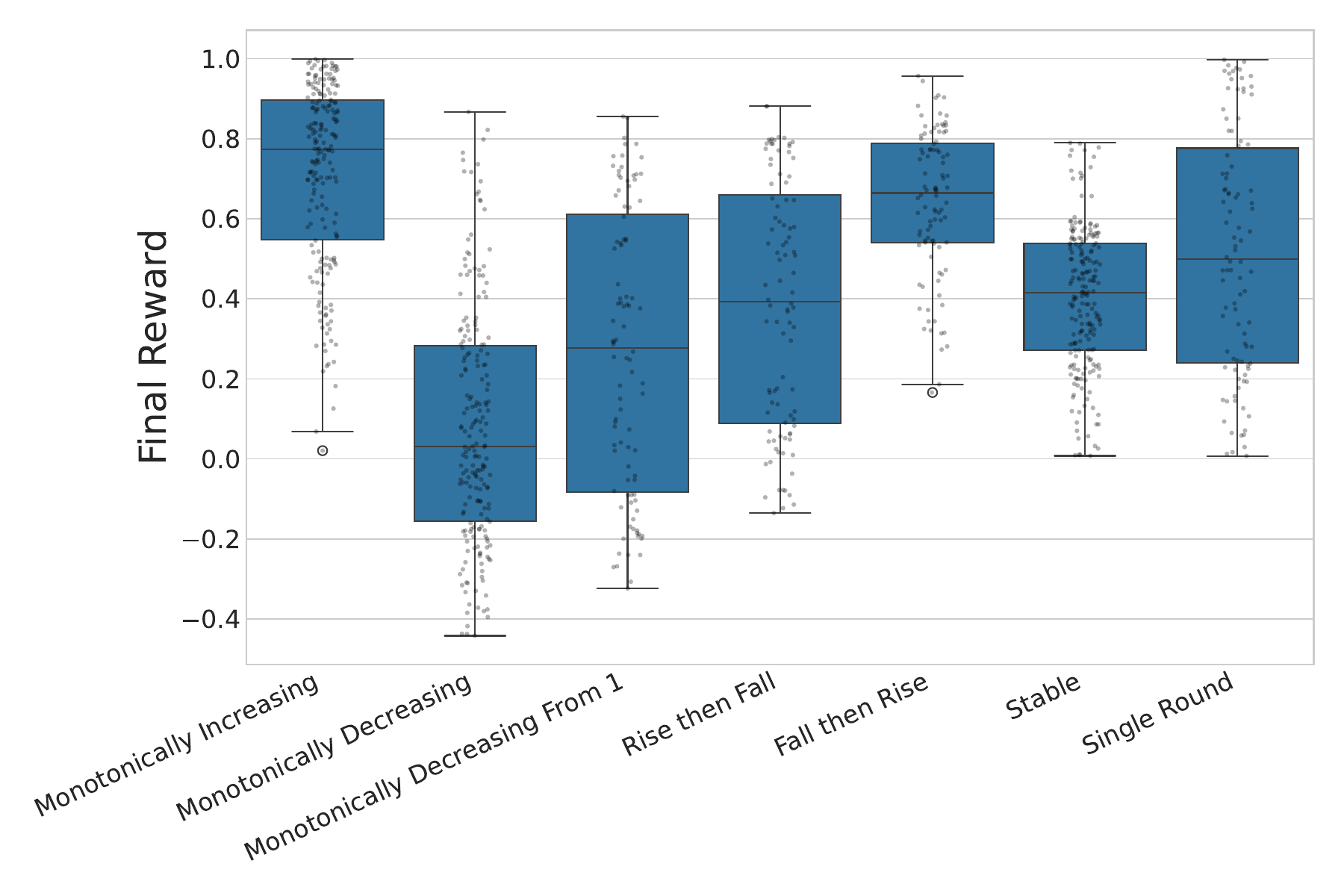}
    } 
\end{minipage}
\caption{ATR reward analysis, showing the ATR on 0/1 reward (left) and the dense reward distribution by strategy type (right).}
\label{ATR}
\end{figure*}

\subsubsection{Ablation Study on ATR Design}
\label{exp:atr}

We present the validation results regarding the necessity of each ATR component in Table~\ref{tab:main_results}.

\noindent\textbf{Necessity of Asymmetric Shaping} We compared our asymmetric design with a symmetric setting ($|R_{\text{High}\to\text{Low}}| = |R_{\text{Low}\to\text{High}}|$). We observed that the symmetric setting causes the model to engage in repetitive generation, resulting in numerous unnecessary loops and degrading sample learning efficiency. This confirms that asymmetric penalties are crucial for eliminating limit cycles and enforcing the strict dissipativity condition.

\noindent\textbf{Advantage of Trajectory Aggregation} In this variant, we assigned the step-wise reward directly as the advantage to the corresponding tokens and removed the group normalization operation. The performance drop observed highlights the efficacy of our aggregated approach in mitigating the credit assignment problem. Under this setting, the prefix context varies at each turn, making it impossible to perform group normalization based on the same original question, thereby compromising the effectiveness of the advantage estimation.

\subsubsection{Ablation Study on the Scaling Factor $\alpha$ in CSMR}

Sensitivity analysis (Table~\ref{tab:alpha_aba}) shows negligible performance variance, confirming CSMR's robustness to $\alpha$. This factor distinguishes ``perfect'' matches from \textbf{``pseudo-perfect''} ones—where column values match but row compositions differ (e.g., $P = \texttt{[[a1, b2], [a2, b1]]}$ vs. $G = \texttt{[[a1, b1], [a2, b2]]}$). By penalizing these subtle mismatches, $\alpha$ prevents false-positive reinforcement and ensures precise semantic alignment.

\section{Analyze}
\subsection{The Role of CSMR}
\begin{figure}[h]
    \centering
    \includegraphics[width=0.99\columnwidth]{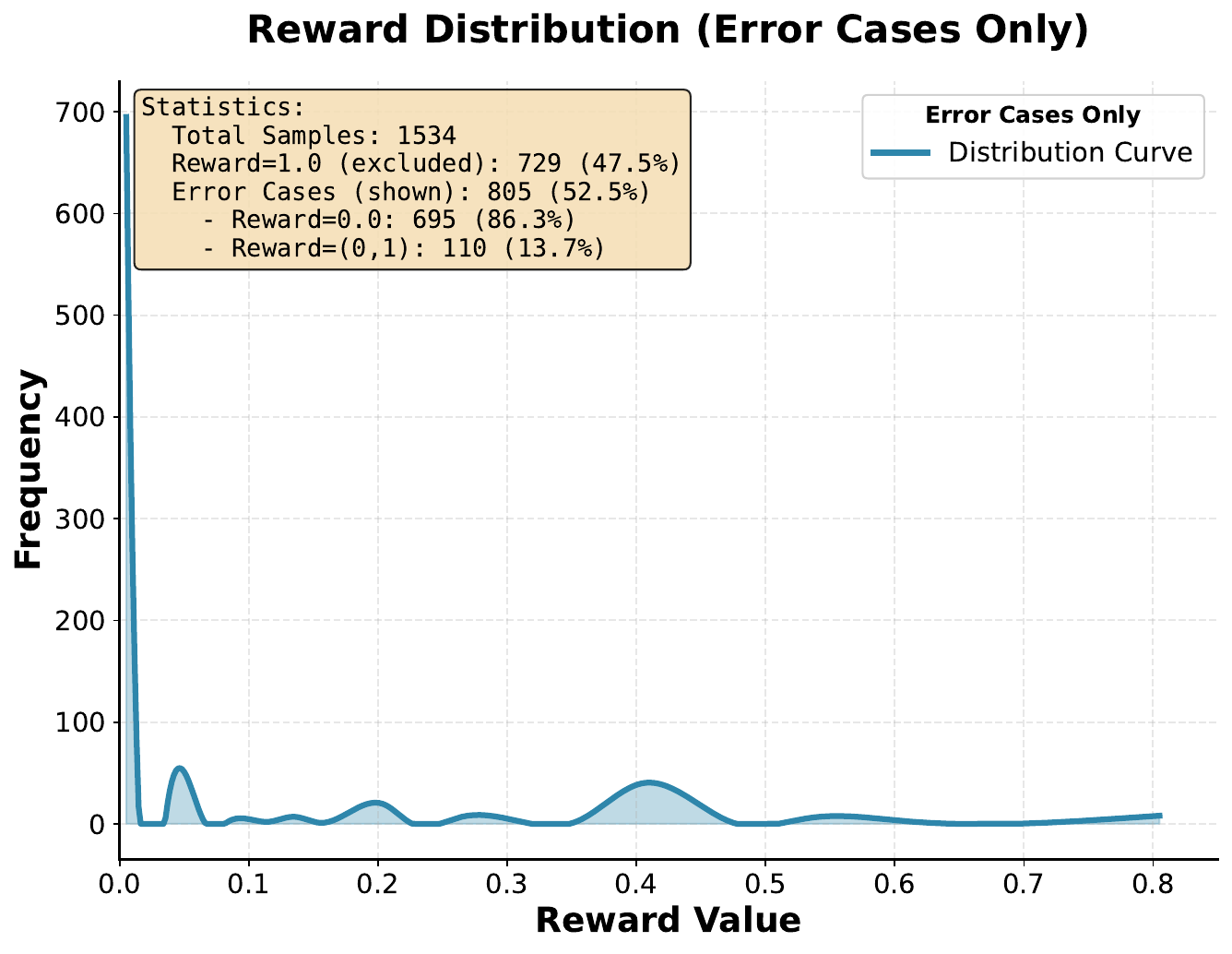}
    \caption{Turn count distribution for models using CSMR on the BIRD-Dev.}
    \label{fig:csmr}
\end{figure}

We quantified the reward distribution of the base model Qwen2.5-7B-Instruct on the Bird dataset. We can clearly see that in Figure~\ref{fig:csmr}, among the 52.5\% of data points that are not completely correct, 13.7\% of the data contains more information, as its response is partially correct. However, the traditional 0/1 distribution completely missed this information, which the CSMR mechanism can capture.

\begin{figure*}[htb!]
    \centering
    \includegraphics[width=0.9\textwidth]{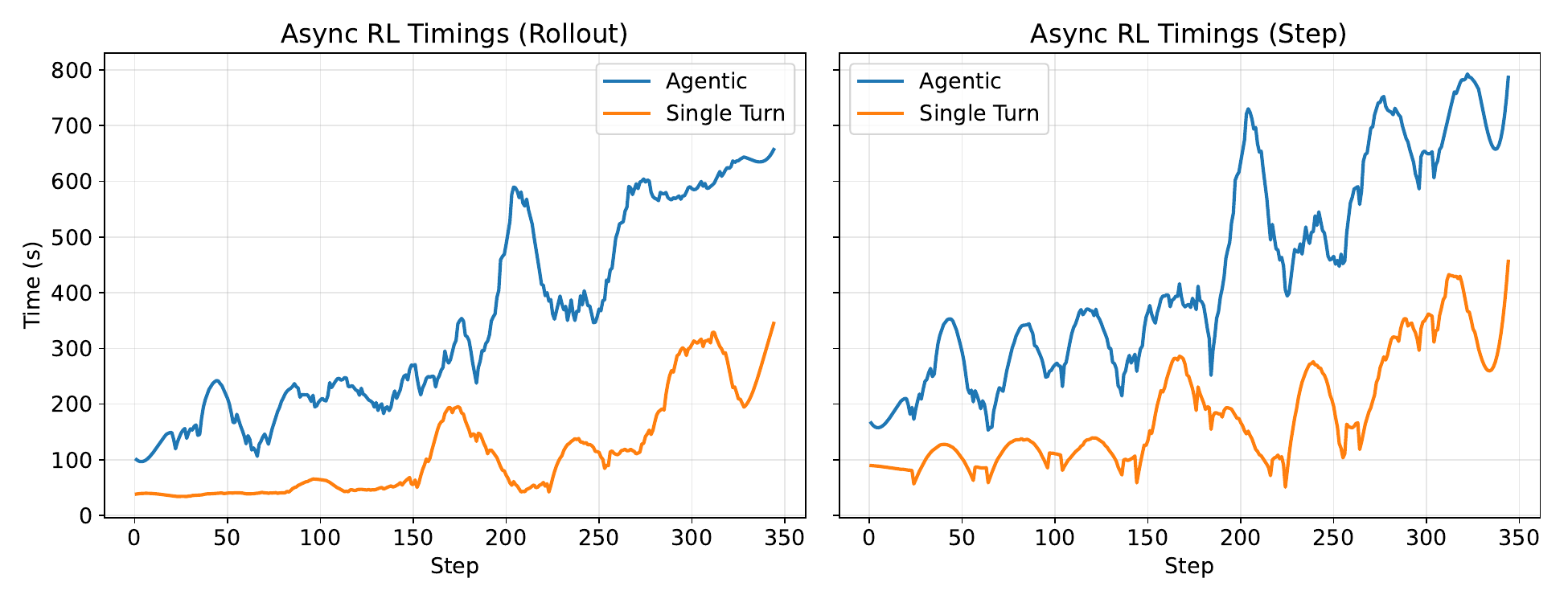} 
    \caption{Agentic vs. Single-Turn training speed comparison.}
    \label{fig:efficiency_comparison}
\end{figure*}
\subsection{The Role of ATR}
Effective reward design hinges on a clear \textbf{reward shape}, enabling the model to gauge performance and optimize strategically. We theoretically examine ATR’s guidance toward optimal trajectories, demonstrated under two distinct signals in Figure~\ref{ATR}. 

Under the \textbf{binary signal}, we prioritize reaching the correct solution efficiently (i.e., fewer steps or simpler logic); here, a persistent incorrect trajectory (e.g., \texttt{[0, 0]}) receives a $-0.10$ baseline, which is significantly higher than the reward for a \textbf{regressive trajectory} (e.g., \texttt{[1, 0]}, $-0.60$). This disparity penalizes abandoning validated correct states, effectively suppressing policy oscillations and enforcing stability across turns. Conversely, under the \textbf{dense signal}, we analyze trajectories within a three-turn window, rewarding their morphological ``shape'' (e.g., monotonic increase, rise-then-fall, or stagnation). Results show that \textbf{``Monotonically Increasing''} trajectories receive the highest average reward, encouraging beneficial exploration toward convergence. Crucially, high-quality \textbf{Dense Rewards} are indispensable for internalizing gradual refinement, underscoring the synergy between \textbf{CSMR} (providing high-fidelity signals) and \textbf{ATR} (integrating signals into a coherent trajectory evaluation).

\subsection{Training Efficiency Comparison}

We use the async setup in Verl, where, in each step, it connects vLLM inference and tool calls to execute a unified rollout, and then calculates the $\mathbf{\log p}$ to update the gradients. Therefore, we compared the rollout time and the total time for the Agentic and single-turn methods at each step, as shown in Figure~\ref{fig:efficiency_comparison}. The execution time for the Agentic rollout is almost twice that of the single-turn method. The rollout phase accounts for the longest duration. The update phase is also longer for Agentic because it involves a longer response, resulting in a total time that is twice that of the single-turn method.

\section{Related Work}
\paragraph{RLVR for Single-Turn Text-to-SQL}
A more recent development is RLVR~\citep{yue2025does}, a promising strategy that is part of a broader trend of boosting LLM reasoning (e.g., OpenAI’s O-series~\cite{openai2025o3}, DeepSeek-R1~\cite{guo2025deepseek}, and Kimi K1.5~\cite{kimi2025_k1_5}), especially in areas like mathematics, coding, and reasoning~\citep{deepseekmath, qwen2.5, htl}. Recent reasoning-oriented models have primarily targeted single-turn Text-to-SQL tasks. STaR-SQL~\cite{he2025starsqlselftaughtreasonertexttosql} utilizes rationale-based SFT. Reasoning-SQL~\cite{pourreza2025reasoning} and SQL-R1~\cite{ma2025sql} apply reinforcement learning to ensure logical and execution consistency; notably, Reasoning-SQL~\cite{pourreza2025reasoning} also designed an n-gram-based reward mechanism to provide more fine-grained feedback. These methods fail to surpass single-turn bottlenecks due to a lack of interactive environment verification.

\paragraph{Multi-turn Agent RL}
Early multi-turn LLM methods relied heavily on prompting: ACT-SQL~\cite{zhang2023actsql} rewrites multi-turn queries into single-turn inputs via Chain-of-Thought prompting, while CoE-SQL~\cite{zhang2024coesql} incrementally edits the prior SQL. Both of these methods depend on closed-source GPT models and lack database verification or self-correction mechanisms.
In contrast, Agent models like Search-R1~\cite{jin2025_searchR1}, WebAgent-R1~\cite{wei2025_webagentR1}, verltool~\cite{jiang2025verltoolholisticagenticreinforcement} and Retool~\cite{feng2025retool} extend reasoning capabilities through multi-turn interactions with an environment. Applying this paradigm to the SQL domain, MTSQL-R1~\cite{guo2025mtsql} has adopted multi-turn execution to call SQL statements, using reinforcement learning to train the model's interaction with the external database.
However, these reinforcement learning methods mostly rely on a binary reward based on the final answer. In a long-horizon, multi-turn trajectory, this binary reward signal is extremely sparse, making it difficult for the model to effectively distinguish the quality of the reasoning process~\cite{huo2025bird,cui2025processreinforcementimplicitrewards,Fine-Grained_Human_Feedback,Luo2024ImproveMR}.

\section{Conclusion}

In this paper, we addressed the critical bottleneck of sparse-feedback-driven credit assignment in Agentic RL, focusing our experimental validation on the Text-to-SQL task domain. We proposed Agentic SQL along with a universal two-tiered reward mechanism, which not only resolves the challenges associated with sparse reward signals but also advances the SOTA in Text-to-SQL generation. A key contribution of this work is the integration of \textit{Lyapunov stability theory} into the reinforcement learning reward design. We rigorously proved that our asymmetric transition matrix acts as an energy dissipation operator, mathematically guaranteeing a cycle-free policy and ensuring asymptotic stability throughout the reasoning process. Detailed ablation experiments highlight the specific contribution of each design component. These results validate that dense, process-oriented signals are essential for bridging the gap between LLMs reasoning and real-world database interactions.
\section*{Limitations}

Although our framework significantly improves the performance of agentic SQL, several limitations remain to be addressed in future work:

\paragraph{Computational Overhead and Latency} The transition from a single-turn paradigm to a multi-turn agentic framework inevitably increases computational costs. Our experimental analysis shows that the execution time for the agentic rollout is nearly twice that of the single-turn method, with the rollout phase accounting for the longest duration. This increased latency, combined with longer response lengths during the update phase, poses challenges for real-time deployment in latency-sensitive environments.

\paragraph{Fixed Interaction Horizon} Our current implementation adopts a finite-horizon MDP with a system prompt that limits the agent to a maximum of three tool calls. While this prevents infinite loops and controls costs, it may restrict the model's ability to solve extremely complex, multi-step reasoning tasks in enterprise-grade workflows that require more extensive exploration.

\paragraph{Hyperparameter Dependency in Reward Design} While we leverage Lyapunov stability theory to provide theoretical guarantees for convergence, the practical implementation still relies on several heuristic hyperparameters. These parameters include the stagnation threshold $\tau$ within the ATR mechanism. Although our ablation experiments demonstrate that the system exhibits a certain degree of robustness, these parameters may require further fine-tuning when the framework is applied to database domains with significantly different architectural complexities or data distributions.
\bibliography{custom}

\appendix
\newpage
\section{Key Components of the MDP}

\paragraph{State Space ($\mathcal{S}$):} The state $s_t \in \mathcal{S}$ encapsulates the complete interaction history, including the original natural language question $q$, the database schema $\mathcal{D}_{\text{schema}}$, and the sequence of past actions and observations.

\paragraph{Action Space ($\mathcal{A}$):} The structured tool-use signals parsed from the complete response generated by the policy $\pi_\theta$. Primary tools include \texttt{execute\_sql(query: str)} and \texttt{finish(answer: str)}.

\paragraph{Transition Dynamics ($\mathcal{P}$) and Rethink Mechanism:} The dynamics are determined by the agent-environment interaction and, crucially, a rethink mechanism activated upon receiving the database observation $o_t$. This mechanism enables the agent to autonomously evaluate the result and decide whether to continue the iteration or terminate.

\paragraph{Process-based Reward Function ($\mathcal{R}$):} Unlike other agent frameworks, the core of our approach is a dense, \textbf{process-based reward function}, $\mathcal{R}(s_t, a_t)$. This function has two distinguishing properties: first, it provides fine-grained feedback for scoring the outcome, rather than a single binary reward; second, we introduce a fused, composite reward signal that offers a fine-grained reflection of our trajectory's intermediate steps, rather than judging based solely on the final result. This approach effectively mitigates the credit assignment problem inherent in sparse, outcome-only rewards.

\section{Proof of Theorem 1 (Strict Dissipativity)}
\label{app:proof}

To theoretically justify how ATR prunes limit cycles, we model the agent's reasoning as a discrete-time dynamical system on a semantic state space $\mathcal{S}$.

\begin{definition}[Lagrange Stability]
A reasoning system is \textbf{Lagrange stable} if every trajectory $\tau = \{s_1, s_2, \dots\}$ remains within a bounded subset of $\mathcal{S}$. In the context of Agentic RL, this represents a state where the agent oscillates indefinitely (limit cycles) without diverging or converging.
\end{definition}

\begin{definition}[Strict Dissipativity]
The system satisfies the \textbf{Strict Dissipativity Condition} if for any non-trivial closed cycle $\mathcal{C}$, the accumulated reward is strictly negative: $\sum_{t \in \mathcal{C}} R_t < 0$. This implies a net loss of ``energy'' over any repetitive path, preventing the agent from remaining in a loop.
\end{definition}

\begin{theorem}[Elimination of Limit Cycles]
The optimal policy $\pi^*$ is cycle-free if and only if the reward transition function satisfies the strict dissipativity condition.
\end{theorem}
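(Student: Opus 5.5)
We make the statement precise enough to argue both directions. The reasoning process is modeled as a deterministic or stochastic decision process on the (quantized) state space $\mathcal{S}$, with per-transition reward $R_t=\Psi(R_{t-1},R_t)$ given by Eq.~\eqref{eq:atr_all}. The return of a policy is the cumulative ATR, undiscounted and over a finite horizon $K$, as in Eq.~\eqref{eq:objective}. \emph{Cycle-free} means that no trajectory generated with positive probability by $\pi^*$ revisits a state. A cycle $\mathcal{C}=(s_{i},s_{i+1},\dots,s_{j}=s_i)$ is \emph{non-trivial} if it contains at least one transition. The theorem is then the equivalence between ``every optimal policy avoids cycles'' and ``$\sum_{t\in\mathcal{C}}R_t<0$ for every non-trivial cycle $\mathcal{C}$.''

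For the ``if'' direction we use a cycle-excision argument. Suppose strict dissipativity holds and some optimal trajectory $\tau$ contains a cycle $\mathcal{C}$ from $s_i$ back to $s_i$. Excise that segment to get $\tau'$, which uses the same prefix up to $s_i$ and then continues exactly as $\tau$ did after $s_j$. The Markov property on the quantized states ensures $\tau'$ is still feasible. The return satisfies $G(\tau')=G(\tau)-\sum_{t\in\mathcal{C}}R_t>G(\tau)$. We then need a policy that realizes $\tau'$; the policy taking, at $s_i$, the action $\pi^*$ takes at $s_j$ does this. This contradicts optimality.

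We then instantiate this for ATR. Every cycle in the two-state quantization consists either of self-loops, or of equal numbers of Low$\to$High and High$\to$Low transitions. In the mixed case the magnitudes $|\Delta R_t|$ must balance over the loop, since $\sum\Delta R_t=0$ around a closed cycle. The net reward is therefore $(1.0-1.5)\sum_{\uparrow}|\Delta R_t|<0$, plus the contributions of the $\mathcal{M}_{LL}=0$ and $\mathcal{M}_{HH}=0$ self-loops, which contribute nothing.

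For the ``only if'' direction we argue by contraposition. Suppose some non-trivial cycle $\mathcal{C}$ has $\sum_{t\in\mathcal{C}}R_t\ge 0$. If the sum is strictly positive, repeating $\mathcal{C}$ as often as the horizon allows strictly increases return, so $\pi^*$ must cycle. If the sum is exactly zero, as in the symmetric matrix case, inserting $\mathcal{C}$ leaves the return unchanged. Then there exists an optimal policy that cycles, so optimality does not force cycle-freeness. This reproduces the ``Lagrange stable'' oscillation mentioned in the main text.

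The main obstacle is this zero-sum boundary case together with the pure self-loop (stagnation) cycles. With $\mathcal{M}_{LL}=\mathcal{M}_{HH}=0$, a stagnation loop has zero reward, so strict dissipativity as literally stated fails on trivial-length loops. The equivalence can then hold only in the weak sense that \emph{some} optimal policy cycles, and not that \emph{every} optimal policy does. I would first try to resolve this by restricting ``non-trivial cycle'' to cycles containing at least one inter-level transition and by reading ``cycle-free'' as uniqueness of acyclic optimizers up to such excisions. If that proves too weak, the fallback is to use the small negative stagnation penalty visible in Figure~\ref{ATR} (e.g.\ $[0,0]\mapsto-0.10$), which makes self-loops strictly negative as well.
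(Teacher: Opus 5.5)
Your structure (cycle excision for sufficiency, contraposition for necessity, then an explicit ATR computation) goes beyond the paper. The paper's proof only evaluates a two-step regression/restoration loop, obtains $(1-k)\sum r<0$, and then appeals to Lyapunov theory, with no actual argument for either direction of the equivalence. But at the generality you set up (arbitrary, possibly stochastic dynamics on $\mathcal{S}$), the biconditional is false in both directions.

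\textbf{Necessity.} A single cycle with $\sum_{t\in\mathcal{C}}R_t\ge 0$ does not force $\pi^*$ to cycle. Strict dissipativity quantifies over \emph{all} cycles, whereas cycle-freeness of $\pi^*$ only involves cycles that an optimal trajectory can reach and still profit from within the horizon. A nonnegative cycle that is unreachable from $s_0$ leaves every optimal policy acyclic. So does one whose entry cost exceeds what the few traversals allowed by the horizon can earn back. Your zero-sum insertion likewise needs $\mathcal{C}$ to touch an optimal trajectory with unused horizon. You need one of two fixes:
\begin{enumerate}
\item Add a reachability-and-slack hypothesis. In the SQL setting reachability is automatic, since any achievable score can be produced at any turn, so only the horizon needs checking.
\item Change the quantifiers, e.g.\ ``strict dissipativity holds iff $\pi^*$ is cycle-free in every deterministic environment (transition graph, initial state, horizon) carrying this reward.'' Necessity then follows by building an environment whose best route runs through the offending cycle.
\end{enumerate}

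\textbf{Sufficiency.} Excision is realizable by a policy only under deterministic transitions. Take a ``retry'' action that reaches a high-reward goal with probability $p$ and otherwise returns to the same state at a small cost. Every cycle is strictly negative, yet the optimal policy revisits that state with positive probability. So drop the stochastic case; this is harmless, since SQL execution is deterministic given the query. Then realize $\tau'$ by its open-loop action sequence with early termination. As written, your construction is vacuous for a stationary $\pi^*$, which already acts identically at $s_i=s_j$.

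\textbf{The ATR instantiation mixes two state spaces.} The balance $\sum_{\uparrow}|\Delta R_t|=\sum_{\downarrow}|\Delta R_t|$ needs the loop to close in the continuous score $\Phi$. But you defined cycles in the quantization $S_t=\mathbb{I}(\Phi(s_t)>\tau)$, on which $\Psi$ is not even a function of consecutive states because of the factor $|\Delta R_t|$. With threshold $\tau=0.6$, the scores $0.0\to0.9\to0.5$ form a Low$\to$High$\to$Low cycle with total $0.9-1.5\cdot0.4=+0.3$. Under this threshold reading, even the $\Phi$-closed loop $0.0\to0.9\to0.5\to0.0$ totals $+0.3$, because the within-Low drop is a zero-reward stagnation step. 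The computation is valid only if:
\begin{enumerate}
\item[(a)] states are $\Phi$-values, which is also where your Markov assumption belongs (with the paper's full-history state nothing ever repeats, so cycle-freeness is vacuous);
\item[(b)] transition types are read off the sign of $\Delta R_t$, as in the paper's rule (i);
\item[(c)] stagnation means $\Delta R_t=0$ exactly, since with a tolerance, a rise just above it followed by drops just below it closes a loop with positive total.
\end{enumerate}
Under these conventions dissipativity has a one-line verification. Write $\Psi_t=\Delta R_t-\tfrac{1}{2}(\Delta R_t)^-$ with $x^-=\max(-x,0)$. A trajectory then earns $\Phi_{\mathrm{last}}-\Phi_{\mathrm{first}}-\tfrac{1}{2}\sum_t(\Delta R_t)^-$, which is strictly negative on every non-constant $\Phi$-closed loop. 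The symmetric matrix instead gives the telescoping, potential-based sum $\Phi_{\mathrm{last}}-\Phi_{\mathrm{first}}$, so loops there are exactly neutral.

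Your per-turn-penalty fallback (the paper's $C_{\text{turn}}$) is the right way to make stagnation loops strictly negative, but it is an extra hypothesis. It also makes the symmetric matrix strictly dissipative (each loop then costs $|\mathcal{C}|\,C_{\text{turn}}$), so with it the theorem no longer separates asymmetric from symmetric shaping.
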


\begin{proof}
Consider a potential limit cycle $\mathcal{C}$ consisting of a \textbf{regression} step $s_{\text{High}} \to s_{\text{Low}}$ (with reward $R_{\text{High} \to \text{Low}}$) and a \textbf{restoration} step $s_{\text{Low}} \to s_{\text{High}}$ (with reward $R_{\text{Low} \to \text{High}}$).

\begin{enumerate}
    \item \textbf{Symmetric Case:} If the reward scheme is symmetric, $|R_{\text{High} \to \text{Low}}| = |R_{\text{Low} \to \text{High}}|$, then $\sum_{t \in \mathcal{C}} R_t = 0$. The system is Lagrange stable but lacks a driving force to exit the loop.
    
    \item \textbf{Asymmetric Case (ATR):} Our ATR design follows the matrix $\mathcal{M}$, imposing $|R_{\text{High} \to \text{Low}}| > |R_{\text{Low} \to \text{High}}|$. Let $R_{\text{High} \to \text{Low}} = -k \cdot r$ and $R_{\text{Low} \to \text{High}} = r$ where $k > 1$. The total cycle return becomes:
    \begin{equation}
        \sum_{t \in \mathcal{C}} R_t = (1-k) \sum r < 0
    \end{equation}
    
    \item \textbf{Conclusion:} Per Lyapunov stability theory, since every cycle incurs a net cost, the system acts as an \textbf{energy dissipation operator}. This net loss forces the trajectory to terminate at the global maximum, effectively pruning all limit cycles from the policy space. \qedhere
\end{enumerate}
\end{proof}

\section{Training Detail}
\label{append:parameters}
\begin{table}[ht]
    \centering
    
    \caption{Hyperparameters for RL Training}
    \label{parameter_table}
    \small 
    \begin{tabular}{c|c}
        \toprule
        \textbf{Hyperparameter} & \textbf{Value} \\
        \midrule
        Batch Size              & 128 \\
        Learning Rate           & 1e-6 \\ 
        Rollout Temperature     & 1.0 \\
        Rollout Top-p           & 0.95 \\
        Validation Temperature  & 0.6 / 1.0 \\
        Validation Top-p        & 0.95 \\
        PPO Epochs              & 1 \\
        Max Response Length     & 2048 / 4096 \\
        Number of Rollouts      & 8 \\
        Training Epochs         & 5 \\
        GPU Con & 32 * A800 \\
        Training Data & 8,958/17614 \\
        B & 2 \\
        $\alpha$ & 0.8 \\
        $\mathcal{C}_{\text{turn}}$ & 0.0001 \\
        $\tau$ & 0.6 \\
        
        \bottomrule
    \end{tabular}
\end{table}

We utilize the \textbf{BIRD} training set to train our models. This dataset comprises 9,428 question-SQL pairs from 70 databases across diverse domains such as airlines, movies, and sales. We filtered out samples where the gold SQL query failed to execute, resulting in a final set of 8,958 training instances. For evaluation, we primarily use the \textbf{BIRD-Dev} set to assess in-domain capabilities. We introduce the \textbf{Spider} benchmark to test the model's out-of-distribution (OOD) generalization. Additionally, we leverage the challenging enterprise-grade \textbf{Spider-2.0} dataset to evaluate the model's \textit{Agent} capabilities. This dataset targets highly difficult real-world scenarios, representing a strong indicator of practical Text-to-SQL deployment readiness. Specifically, we extract a subset of 135 locally executable examples from Spider-2.0-Lite. To ensure a fair comparison, we evaluate both SQL-R1 and Arctic-Text2SQL-R1-7B using the identical configuration. Throughout all evaluations, we adopt a 0/1 reward mechanism, where a query is considered correct only if all rows in the execution result match the ground truth exactly.

Our experiments employed two sets of trials. The first set involved the Qwen2.5-7B-Instruct model, which underwent Reinforcement Learning (RL) training directly without cold start. The second set involved the OmniSQL model, which required the Format-6k fine-tuning step to teach it the tool-calling format. Format-6k denotes that a subset of data was randomly sampled from the training set and formatted into the tool-calling format. 

During our experimentation, we observed that general-purpose code models, such as Qwen2.5-Coder~\cite{hui2024qwen25codertechnicalreport}, are overly specialized. This specialization leads to poor \textit{instruction-following} performance, rendering them unable to correctly learn the required Agent format. Even when SFT techniques were applied to correct their output format, the model failed to exhibit strong exploratory capabilities in subsequent multi-turn generation tasks. 

For the RL framework, we utilize the open-source library VERL; for the Spider 2.0 evaluation, we employ the open-source repository ReFoRCE~\cite{deng2025reforce}.

The hyperparameter configurations are detailed in Table~\ref{parameter_table}. A single Reinforcement Learning experiment for Agentic SQL takes approximately 24 hours to complete. Specifically, $\alpha=0.8$ serves as the scaling factor for the CSMR to cap structural matches at 0.8, effectively distinguishing them from truly perfect matches. The parameter $B=2$ defines the reward clipping boundary to constrain the total reward within $[-2, 2]$ for training stability. Additionally, $C_{\text{turn}}=0.0001$ introduces a turn penalty to incentivize concise reasoning, while $\tau=0.6$ acts as a stagnation threshold to determine state transitions for the ATR calculation.

\section{Turn Correct Distribution}

\begin{figure}[h]
    \centering
    \includegraphics[width=0.99\columnwidth]{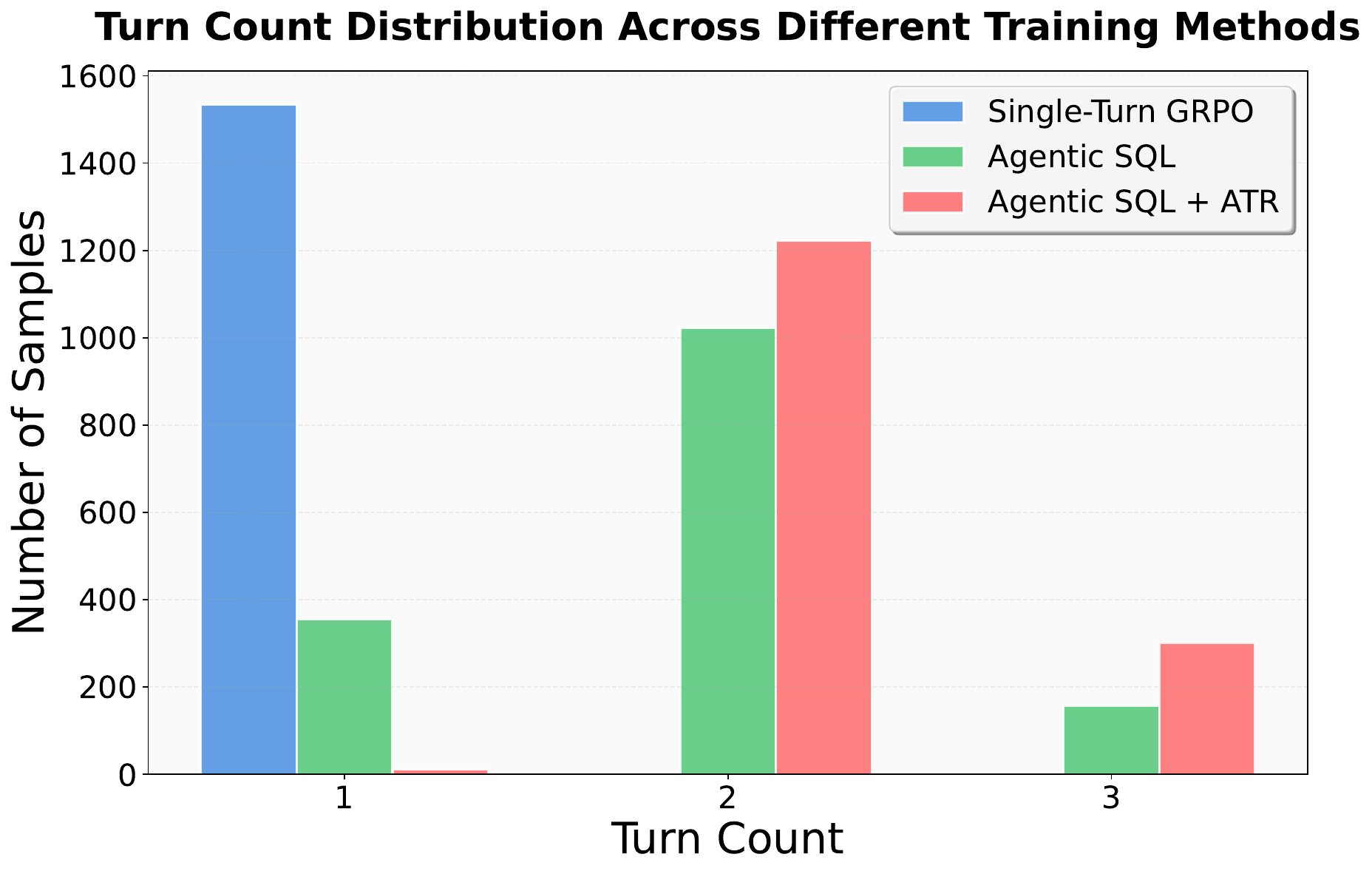}
    \caption{Turn count distribution for models using CSMR on the BIRD-Dev.}
    \label{fig:adaptive}
\end{figure}

\begin{table}[h]
    \centering
    \small
    \caption{Model Performance on Turn-n Interactions.}
    \begin{tabular}{l c c c}
        \toprule
        Model & Turn 1 & Turn 2 & Turn 3 \\
        \midrule
        Single-Turn GRPO & 59.4 & - & - \\
        Agentic SQL & 62.2 & 66.0 & 39.5 \\
        Agentic SQL + ATR & 72.7 & 71.0 & 36.2 \\
        \bottomrule
    \end{tabular}
    \label{tab:mic}
\end{table}

Figure~\ref{fig:adaptive} presents the turn-number distribution (i.e., the percentage of generations using 2 turns, 3 turns, etc.) for the RL-trained models. Conversely, Table~\ref{tab:mic} reports the accuracy achieved by these models at a specified number of $N$ turns.

We observe that different reward functions result in markedly distinct distributions. The model trained without ATR exhibits a more balanced distribution, generating outputs across 1, 2, and 3 turns. In contrast, the model employing ATR almost entirely abandons single-turn generation, instead focusing heavily on 2-turn and 3-turn generations, where it simultaneously achieves the highest overall accuracy.

\section{Additional Experiment}
For experimental simplicity, we conduct these comparative trials using a single-turn GRPO baseline. The primary objective of $\alpha$ is to provide a clear distinction between truly perfect samples and ``pseudo-perfect'' matches---cases where the column value-sets match but the row-level compositions are incorrect, such as $P = \texttt{[[a1, b2], [a2, b1]]}$ and $G = \texttt{[[a1, b1], [a2, b2]]}$. By incorporating $\alpha$, the model effectively learns to differentiate between these two scenarios, thereby validating the efficacy of our approach. While ``pseudo-perfect'' matches may be introduced, this mechanism ensures they are appropriately penalized to avoid misleading the learning process.

\begin{table}[h]
    \centering
    \small 
    \caption{Sensitivity Analysis of the Scaling Factor $\alpha$ in CSMR.}
    \begin{tabular}{l c}
        \toprule
        Model & Bird-Dev \\
        \midrule
        Single-Turn GRPO & \\
        \quad $\alpha=0.6$ & 59.2 \\
        \quad $\alpha=0.7$ & 60.1 \\
        \quad $\alpha=0.8$ & 59.4 \\
        \bottomrule
    \end{tabular}
    \label{tab:alpha_aba}
\end{table}

\section{Case Study}
\label{sec:case study}
In this section, we present a complete multi-turn case study. It illustrates how the agent uses information about missing columns returned by the executor to iteratively refine its plan, ultimately leading to the correct result.

\begin{dialogbox}
\begingroup
\colorbox{yellow!3}{System Prompt} \\
\begin{lstlisting}[]
You are a helpful SQL assistant. You will solve the user's question by calling a tool to execute SQL code and query a SQL database. After each tool call, you must evaluate the result returned in <tool_response></tool_response> to determine its relevance to the question, and decide for yourself whether to make another query. You may also use a tool to inspect the database schema first before providing your final SQL code. You can call the tool a maximum of 3 times.
\end{lstlisting}
\colorbox{yellow!3}{User Prompt} \\
\begin{lstlisting}[language=SQL]
Task Overview:
You are a data science expert. Below, you are provided with a database schema and a natural language question. Your task is to understand the schema and generate a valid SQL query to answer the question.
Database Engine:
SQLite
Database Schema:
CREATE TABLE frpm (
    CDSCode text, -- example: ['01100170109835', '01100170112607']
    `Academic Year` text, -- example: ['2014-2015']
    `County Code` text, -- example: ['01', '02']
    `District Code` integer, -- example: [10017, 31609]
    `School Code` text, -- example: ['0109835', '0112607']
    `County Name` text, -- example: ['Alameda', 'Alpine']
    `District Name` text, -- example: ['Alameda County Office of Education', 'California School for the Blind (State S']
    `School Name` text, -- example: ['FAME Public Charter', 'Envision Academy for Arts & Technology']
    `District Type` text, -- example: ['County Office of Education (COE)', 'State Special Schools']
    `School Type` text, -- example: ['K-12 Schools (Public)', 'High Schools (Public)']
    `Educational Option Type` text, -- example: ['Traditional', 'Juvenile Court School']
    `NSLP Provision Status` text, -- example: ['Breakfast Provision 2', 'Provision 2']
    `Charter School (Y/N)` integer, -- example: [1, 0]
    `Charter School Number` text, -- example: ['0728', '0811']
    `Charter Funding Type` text, -- example: ['Directly funded', 'Locally funded']
    IRC integer, -- example: [1, 0]
    `Low Grade` text, -- example: ['K', '9']
    `High Grade` text, -- example: ['12', '8']
    `Enrollment (K-12)` real, -- example: [1087.0, 395.0]
    `Free Meal Count (K-12)` real, -- example: [565.0, 186.0]
    `Percent (%) Eligible Free (K-12)` real, -- example: [0.5197, 0.4708]
    `FRPM Count (K-12)` real, -- example: [715.0, 186.0]
    `Percent (%) Eligible FRPM (K-12)` real, -- example: [0.6577, 0.4708]
    `Enrollment (Ages 5-17)` real, -- example: [1070.0, 376.0]
    `Free Meal Count (Ages 5-17)` real, -- example: [553.0, 182.0]
    `Percent (%) Eligible Free (Ages 5-17)` real, -- example: [0.5168, 0.4840]
    `FRPM Count (Ages 5-17)` real, -- example: [702.0, 182.0]
    `Percent (%) Eligible FRPM (Ages 5-17)` real, -- example: [0.6560, 0.4840]
    `2013-14 CALPADS Fall 1 Certification Status` integer, -- example: [1]
    PRIMARY KEY (CDSCode),
    CONSTRAINT fk_frpm_cdscode FOREIGN KEY (CDSCode) REFERENCES schools (CDSCode)
);
-- (Other tables satscores and schools are omitted for brevity, but follow same pattern)


CREATE TABLE satscores (
    cds text, -- example: ['10101080000000', '10101080109991']
    rtype text, -- example: ['D', 'S']
    sname text, -- school name, example: ['FAME Public Charter', 'Envision Academy for Arts & Technology']
    dname text, -- district name, example: ['Alameda County Office of Education', 'Alameda Unified']
    cname text, -- county name, example: ['Alameda', 'Amador']
    enroll12 integer, -- enrollment (1st-12nd grade), example: [398, 62]
    NumTstTakr integer, -- Number of Test Takers, example: [88, 17]
    AvgScrRead integer, -- average scores in Reading, example: [418, 503]
    AvgScrMath integer, -- average scores in Math, example: [418, 546]
    AvgScrWrite integer, -- average scores in writing, example: [417, 505]
    NumGE1500 integer, -- Number of Test Takers Whose Total SAT Scores Are Greater or Equal to 1500, example: [14, 9]
    PRIMARY KEY (cds),
    CONSTRAINT fk_satscores_cds FOREIGN KEY (cds) REFERENCES schools (CDSCode)
);

CREATE TABLE schools (
    CDSCode text, -- example: ['01100170000000', '01100170109835']
    NCESDist text, -- National Center for Educational Statistics school district identification number, example: ['0691051', '0600002']
    NCESSchool text, -- National Center for Educational Statistics school identification number, example: ['10546', '10947']
    StatusType text, -- example: ['Active', 'Closed']
    County text, -- example: ['Alameda', 'Alpine']
    District text, -- example: ['Alameda County Office of Education', 'California School for the Blind (State S']
    School text, -- example: ['FAME Public Charter', 'Envision Academy for Arts & Technology']
    Street text, -- example: ['313 West Winton Avenue', '39899 Balentine Drive, Suite 335']
    StreetAbr text, -- street address, example: ['313 West Winton Ave.', '39899 Balentine Dr., Ste. 335']
    City text, -- example: ['Hayward', 'Newark']
    Zip text, -- example: ['94544-1136', '94560-5359']
    State text, -- example: ['CA']
    MailStreet text, -- example: ['313 West Winton Avenue', '39899 Balentine Drive, Suite 335']
    MailStrAbr text, -- mailing street address, example: ['313 West Winton Ave.', '39899 Balentine Dr., Ste. 335']
    MailCity text, -- mailing city, example: ['Hayward', 'Newark']
    MailZip text, -- mailing zip, example: ['94544-1136', '94560-5359']
    MailState text, -- mailing state, example: ['CA']
    Phone text, -- example: ['(510) 887-0152', '(510) 596-8901']
    Ext text, -- extension, example: ['130', '1240']
    Website text, -- example: ['www.acoe.org', 'www.envisionacademy.org/']
    OpenDate date, -- example: ['2005-08-29', '2006-08-28']
    ClosedDate date, -- example: ['2015-07-31', '2015-06-30']
    Charter integer, -- example: [1, 0]
    CharterNum text, -- example: ['0728', '0811']
    FundingType text, -- example: ['Directly funded', 'Locally funded']
    DOC text, -- District Ownership Code, example: ['00', '31']
    DOCType text, -- The District Ownership Code Type, example: ['County Office of Education (COE)', 'State Special Schools']
    SOC text, -- School Ownership Code, example: ['65', '66']
    SOCType text, -- School Ownership Code Type, example: ['K-12 Schools (Public)', 'High Schools (Public)']
    EdOpsCode text, -- Education Option Code, example: ['TRAD', 'JUV']
    EdOpsName text, -- Educational Option Name, example: ['Traditional', 'Juvenile Court School']
    EILCode text, -- Educational Instruction Level Code, example: ['ELEMHIGH', 'HS']
    EILName text, -- Educational Instruction Level Name, example: ['Elementary-High Combination', 'High School']
    GSoffered text, -- grade span offered, example: ['K-12', '9-12']
    GSserved text, -- grade span served., example: ['K-12', '9-12']
    Virtual text, -- example: ['P', 'N']
    Magnet integer, -- example: [0, 1]
    Latitude real, -- example: [37.658212, 37.521436]
    Longitude real, -- example: [-122.09713, -121.99391]
    AdmFName1 text, -- administrator's first name 1, example: ['L Karen', 'Laura']
    AdmLName1 text, -- administrator's last name 1, example: ['Monroe', 'Robell']
    AdmEmail1 text, -- administrator's email address 1, example: ['lkmonroe@acoe.org', 'laura@envisionacademy.org']
    AdmFName2 text, -- administrator's first name 2, example: ['Sau-Lim (Lance)', 'Jennifer']
    AdmLName2 text, -- administrator's last name 2, example: ['Tsang', 'Koelling']
    AdmEmail2 text, -- administrator's email address 2, example: ['stsang@unityhigh.org', 'jkoelling@efcps.net']
    AdmFName3 text, -- administrator's first name 3, example: ['Drew', 'Irma']
    AdmLName3 text, -- administrator's last name 3, example: ['Sarratore', 'Munoz']
    AdmEmail3 text, -- administrator's email address 3, example: ['dsarratore@vincentacademy.org', 'gmunoz@piedmont.k12.ca.us']
    LastUpdate date, -- example: ['2015-06-23', '2015-09-01']
    PRIMARY KEY (CDSCode)
);
This schema describes the database's structure, including tables, columns, primary keys, foreign keys, and any relevant relationships or constraints.
Question:
Charter schools refers to `Charter School (Y/N)` = 1 in the frpm
Please list the phone numbers of the direct charter-funded schools that are opened after 2000/1/1.

Instructions:
- Make sure you only output the information that is asked in the question. If the question asks for a specific column, make sure to only include that column in the SELECT clause, nothing more.
- The generated query should return all of the information asked in the question without any missing or extra information.
- Before generating the final SQL query, please think through the steps of how to write the query.
Output Format:
In your answer, please enclose the generated SQL query in a code block:
```sql
-- Your SQL query
```
Take a deep breath and think step by step to find the correct SQL query.
For each round's response, please first think step-by-step, and then make the tool call according to this format: 
<tool_call> 
{
    "name": "run_sql_remote",
    "arguments": {
        "query": "```sql\n-- Your SQL query\n```",
        "db_id": "california_schools/california_schools.sqlite"
    }
}
</tool_call>
\end{lstlisting}
\colorbox{yellow!3}{Response-Turn 1}\\
\begin{lstlisting}[language=SQL]
To find the phone numbers of the direct charter-funded schools that are opened after 2000/1/1, we need to follow these steps:

1. Filter the schools to include only those with `Charter School (Y/N)` = 1 (charter schools).
2. Ensure the schools are direct charter-funded, which can be checked from the `Charter Funding Type` column in the `frpm` table.
3. Verify that the schools were opened after 2000/1/1 using the `OpenDate` column in the `schools` table.
Let's write the SQL query step by step:
1. Join the `frpm` table with the `schools` table using the `CDSCode` column.
2. Apply the filter for charter schools (`Charter School (Y/N)` = 1).
3. Check for direct charter funding (`Charter Funding Type` = 'Directly funded').
4. Ensure the schools were opened after 2000/1/1 (`OpenDate` > '2000-01-01').
5. Select the phone numbers from the `schools` table.
Now, let's write the SQL query:
<tool_call>
{
    "name": "run_sql_remote",
    "arguments": {
        "query": "```sql\nSELECT s.Phone\nFROM schools s\nJOIN frpm f ON s.CDSCode = f.CDSCode\nWHERE f.`Charter School (Y/N)` = 1 AND s.`Charter Funding Type` = 'Directly funded' AND s.OpenDate > '2000-01-01'\n```",
        "db_id": "california_schools/california_schools.sqlite"
    }
}
</tool_call>
\end{lstlisting}
\colorbox{yellow!3}{SQL Executor Interaction Results} \\
\begin{lstlisting}[language=SQL]
<tool_response>
error:no such column: s.Charter Funding Type
</tool_response>
\end{lstlisting}
\colorbox{yellow!3}{Response-Turn 2} \\
\begin{lstlisting}[language=SQL]
I apologize for the oversight. The `Charter Funding Type` column is actually in the `frpm` table, not the `schools` table. Therefore, we should join the `frpm` table with the `schools` table and apply the filters using the columns from the `frpm` table.

Let's correct the query:

1. Join the `frpm` table with the `schools` table using the `CDSCode` column.
2. Apply the filter for charter schools (`Charter School (Y/N)` = 1).
3. Check for direct charter funding (`Charter Funding Type` = 'Directly funded') in the `frpm` table.
4. Ensure the schools were opened after 2000/1/1 (`OpenDate` > '2000-01-01').
5. Select the phone numbers from the `schools` table.

Now, let's write the corrected SQL query:
<tool_call>
{
    "name": "run_sql_remote",
    "arguments": {
        "query": "```sql\nSELECT s.Phone\nFROM schools s\nJOIN frpm f ON s.CDSCode = f.CDSCode\nWHERE f.`Charter School (Y/N)` = 1 AND f.`Charter Funding Type` = 'Directly funded' AND s.OpenDate > '2000-01-01'\n```",
        "db_id": "california_schools/california_schools.sqlite"
    }
}
</tool_call>
\end{lstlisting}
\colorbox{yellow!3}{SQL Executor Interaction Results} \\
\begin{lstlisting}[language=SQL]
<tool_response>
[[None], ['(510) 596-8901'], [None], ['(510) 686-4131'], ['(510) 452-2063'], ['(510) 842-1181'], ['(510) 748-4008'], ['(510) 748-4017'], ['(510) 995-4300'], ['(510) 748-4314'], ['(510) 809-9800'], ['(510) 809-9800'], ['(510) 300-1340'], ['(510) 300-1560'],
</tool_response>
\end{lstlisting}
\colorbox{yellow!3}{Response-Turn 2} \\
\begin{lstlisting}[language=SQL]
The phone numbers of the direct charter-funded schools that are opened after 2000/1/1 are as follows:
```sql
SELECT s.Phone
FROM schools s
JOIN frpm f ON s.CDSCode = f.CDSCode
WHERE f.`Charter School (Y/N)` = 1 AND f.`Charter Funding Type` = 'Directly funded' AND s.OpenDate > '2000-01-01'
```
\end{lstlisting}
\endgroup
\end{dialogbox}

\end{document}